\newcommand\blfootnote[1]{%
  \begingroup
  \renewcommand\thefootnote{}\footnote{#1}%
  \addtocounter{footnote}{-1}%
  \endgroup
}
\title{A Data Efficient Framework for Learning Local Heuristics}
\author{
    %Authors
    % All authors must be in the same font size and format.
    Rishi Veerapaneni\equalcontrib,
    Jonathan Park\equalcontrib,
    Muhammad Suhail Saleem,
    Maxim Likhachev
}
\title{My Publication Title --- Single Author}
\author {
    Author Name
}
\title{My Publication Title --- Multiple Authors}
\author {
    % Authors
    First Author Name\textsuperscript{\rm 1},
    Second Author Name\textsuperscript{\rm 2},
    Third Author Name\textsuperscript{\rm 1}
}
\newtheorem{theorem}{Theorem}
\begin{document}

\maketitle

\begin{abstract}
With the advent of machine learning, there have been several recent attempts to learn effective and generalizable heuristics. Local Heuristic A* (LoHA*) is one recent method that instead of learning the entire heuristic estimate, learns a ``local" residual heuristic that estimates the cost to escape a region. 
LoHA*, like other supervised learning methods, collects a dataset of target values by querying an oracle on many planning problems (in this case, local planning problems). This data collection process can become slow as the size of the local region increases or if the domain requires expensive collision checks. Our main insight is that when an A* search solves a start-goal planning problem it inherently ends up solving multiple local planning problems. We exploit this observation to propose an efficient data collection framework that does $<$1/10th the amount of work (measured by expansions) to collect the same amount of data in comparison to baselines. 
This idea also enables us to run LoHA* in an online manner
where we can iteratively collect data and improve our model while solving relevant start-goal tasks.
We demonstrate the performance of our data collection and online framework on a 4D $(x, y, \theta, v)$ navigation domain. 

\end{abstract}

\section{Introduction}

Search-based planning approaches are widely used in various robotics domains from navigation to manipulation. However, their runtime performance is highly dependant on the heuristics employed. A vast majority of search-based planning methods rely on hand-designed heuristics which are usually geometry-based (e.g. Euclidean distances) or solve a lower-dimensional projection of the problem (ignoring some robot/environment constraints). Since these heuristics are simple and manually defined, they can guide search into deep minima \cite{KORF1990189,mha2014}.

% Recently, several works have proposed using supervised learning to learn better heuristics (or priorities) to speed up search.
% % \cite{learningHeuristicA2020,learnExpansionDelay2021,sail2017, Takahashi_Sun_Tian_Wang_2021}. 
% A common theme is that they require collecting a training dataset of optimal solutions to random planning problems generated by an oracle search method. 
Recently, several works have proposed using machine learning to obtain better heuristics (or priorities) to speed up search. We focus on methods using supervised learning which requires collecting a training dataset of optimal solutions generated by an oracle search method. 
For example, \citet{learningHeuristicA2020} uses A* to generate a training dataset of optimal solution values while \citet{Takahashi_Sun_Tian_Wang_2021} uses a backward Dijkstra. 
% \citet{learnExpansionDelay2021} learns an ``expansion delay" heuristic (which is a proxy measure for the depth of local minima). To learn this heuristic, they collect a dataset of expansion delays by using an oracle A* on random planning problems and recording the expansion number for each state on the optimal path. 
\citet{learnExpansionDelay2021} learns an ``expansion delay" heuristic (a proxy measure for the size of local minima) and gathers training data by using an oracle A* on problems and recording the expansion number for each state on the optimal path. 
SAIL \cite{sail2017} learns a priority function by utilizing optimal cost-to-go values collected via a backward Dijkstra oracle deployed solely for data collection. 

Local Heuristic A* (LoHA*) \cite{localHeuristic} is a recent promising work that learns a residual ``local" heuristic. In contrast to a global heuristic which estimates the cost to reach the goal from a state, a residual local heuristic estimates the additional cost required to reach the border of a local region surrounding that state. 
% LoHA* was shown to improve performance over weighted A* and generalize to new maps.
% This learnt residual heuristic when combined with the base global heuristic enables searches to avoid local minima and identify solutions quickly. 
As local heuristics require reasoning only about small regions, they are much easier to learn and generalize better.

However, similar to other supervised learning approaches, LoHA* requires creating a dataset of ground-truth local heuristic residuals. These residuals are computed by defining a local region around a state and running a multi-goal A* from it (details in Section \ref{sec:looking-back}). Thus to collect their dataset, similar to other works, they require running multiple (thousands) of oracle A* calls for training their model. We make the key observation that when an A* search solves a ``global" start-goal planning problem, the inherent best-first ordering associated with the state expansions enables a single A* query to automatically solve multiple local heuristic problems without the need to explicitly query a local search. 
% At every expansion in the global search, our customized backtracking logic verifies if the state being expanded is a solution to a local planning problem (verification is a significantly cheaper operation than computing the solution to a local problem). If it is, the local planning problem and the solution are included in the training dataset. This idea allows us to accumulate the order of 1000 data points within a single global search (without querying a single local search). 
Thus during the global A* call, we design backtracking logic that verifies if a state being expanded is a solution to a local planning problem and adds it to a training dataset if so. This allows us to create a significantly more efficient data collection framework wherein solving a handful of global planning problems allows us to amass sufficient training data.

An important byproduct of this data collection mechanism is that we no longer need a separate data collection phase and can collect data when running LoHA*. Since LoHA* runs a search method during test time, we can collect local heuristic data \textit{online while using LoHA* itself}. This enables us to rapidly learn from our experiences 
% unlike other works using standard data collection which typically resort to curriculum learning to gather more data \cite{learningGlobalHF2011,scaleFree2021}.
in a way not possible with other data collection techniques. \blfootnote{See https://arxiv.org/abs/2404.06728 for appendix.}

Thus overall, our main technical contribution is Data Efficient Local Heuristic A* (DE-LoHA*), our efficient backtracking technique for collecting local heuristic data which also enables online learning. 
We show how this improves data collection efficiency by 10x and how our online method can learn from experiences in under 100 planning calls on a 4D $(x, y, \theta, v)$ navigation domain. 

\begin{figure}[t]
    \centering
    \includegraphics[width=0.45\textwidth]{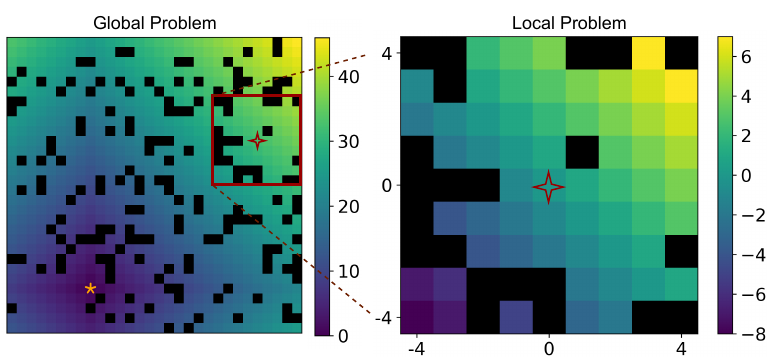}
    \vspace{-0.5em}
    \caption{Figure borrowed from LoHA* \cite{localHeuristic}. Instead of estimating the entire cost-to-go from state $s$ (red diamond) to the goal (orange in left), LoHA* computes the residual cost for $s$ to reach a border region (red box in left, zoomed in on right). This avoids local minima when used during search.
    % using Equation \ref{equation:localH}. This reduces node expansions as it avoids local minima during search.
    }
    \label{fig:lh-related-work}
    \vspace{-1em}
\end{figure}

\section{Preliminaries}

Given a planning domain and a simple hand-designed heuristic (like Euclidean distance), which we will call a ``global" heuristic $h_g$, LoHA* proposes to learn a local heuristic residual $h_k$.
% This residual, when added to the global heuristic, creates a more informative and powerful heuristic that guides searches significantly better by avoiding local minima.
Formally, given a state $s = (x,y,\Omega)$ with position $x,y$ and other state parameters $\Omega$ (e.g. heading, velocity), a local region $LR(s)$ contains the states within a window of $K$, i.e. $LR(s) = \{s' \mid K \geq |s.x-s'.x|, K \geq |s.y-s'.y|\}$. Let $LRB(s)$ be the border of this region, i.e. $\{s' \mid K = |s.x-s'.x| \vee K = |s.y-s'.y| \}$. 
Any path from $s$ to $s_g$ must contain a state in $LRB(s)$, or directly reach the goal in the local region $LR(s)$ (we assume for simplicity unit actions, but this logic can be generalized to non-unit as well). If neither is possible from $s$, then $s$ cannot leave $LR(s)$ and should have an infinite heuristic value. Let, 
\begin{equation} \label{equation:localH}
h_{gk}(s) = \min_{s'}
\begin{cases}
    c(s,s')+h_g(s') & s' \in LRB(s) \\
    c(s,s')+0, & s' = s_g \in LR(s)\\
    \infty, & \text{otherwise}
\end{cases}
% \begin{cases}
%     b(s,s'), & s' \in LRB(s) \\
%     b(s,s'), & s' = s_g \in LR(s)\\
%     \infty, & \text{otherwise}
% \end{cases}
\end{equation}
\vspace{-0.5em}

\noindent Then, the local heuristic residual $h_{k}(s)$ is defined as $h_k(s) = h_{gk}(s) - h_g(s)$. 
$h_{gk}$ is a more informed heuristic as it accurately accounts for the cost to reach a border state from the current state $s$. Conceptually, the defined local heuristic residual captures the mismatch in the estimate by $h_g$ of the cost to reach $LRB(s)$ and the actual cost it takes.

Computing $h_k(s)$ requires identifying the best border state $s' \in LRB(s)$ that minimizes $h_{gk}(s)$. This is achieved by running a local (multi-goal) A* search from $s$ using all the border states as goals with $h_g$ as the heuristic ($h_g$ estimates the cost to $s_g$ and not the border states). In the most common case, where the goal is not located within $LR(s)$, the search terminates upon expanding the first state in $LRB(s)$. This heuristic residual is effective but slow to compute, so LoHA* approximates it with a neural network. 

This network takes in observations in the local region $LR(s)$, namely a local image of the obstacle map and $h_g$ values to predict $h_k(s)$. 
% Instead of a standard MSE loss, they found that taking the MSE in respect to $\log(h_k+1)$ improved performance. 
They collect a dataset offline and regress their network to predict ground truth $h_k(s)$ values.
This model when used with Focal Search \cite{focalSearch1982} has been shown to substantially reduce the number of nodes expanded compared to using just $h_g$ while maintaining suboptimality guarantees.

% For notational convenience, let $b(s, s') = c^*(s, s') + h_g(s')$. Then, a standard global A* search would order and expand nodes $s'$ using priority $b(s_{start}, s')$. Whereas a local search rooted at $s$ would use $b(s, s')$.

\begin{figure*}[t]
    \centering
    \includegraphics[width=0.78\textwidth]{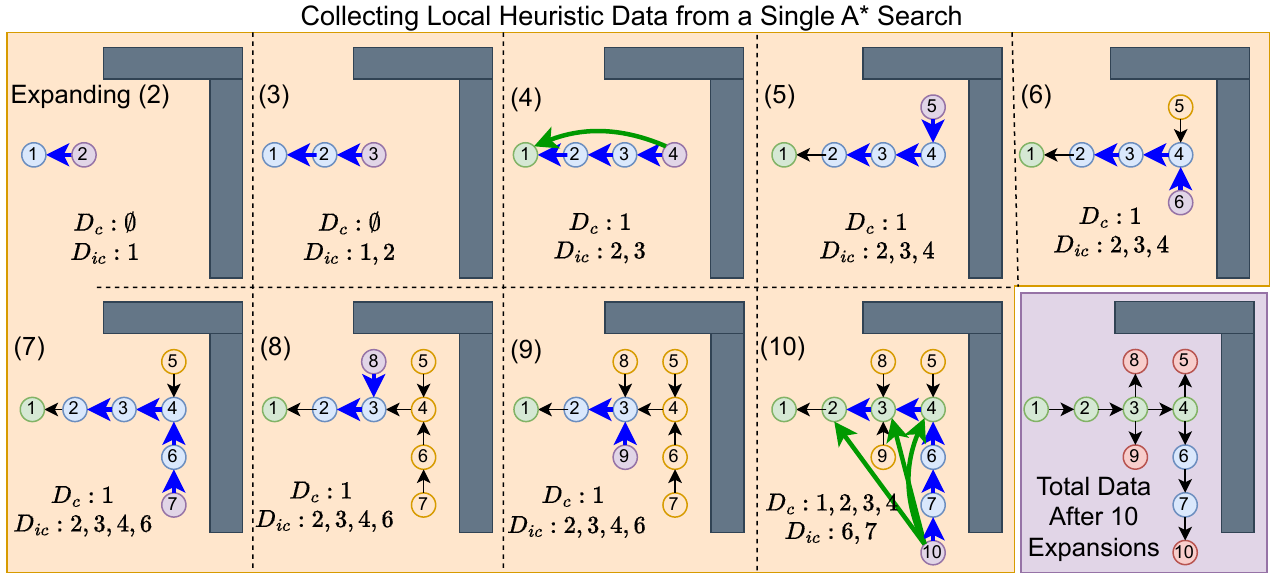}
    \vspace{-2pt}
    \caption{A simplified example of global A* collecting data for local regions with $K=3$, i.e. for $s=(x,y)$ we want to reach a state $(x',y')$ with $x'$ or $y'$ 3 away. Each (i) depicts the $i^{th}$ state expanded with successor $\rightarrow$ parent denoted. Expanding (2), (3) leads to incomplete data $D_{ic}$ for $s_1, s_2$ as we made some progress but did not reach a $LRB$ yet. Expanding $s_4$ and backtracking reveals that $s_4 \in LRB(s_1)$; we have found $LH(s_1)$ and add it to our complete dataset $D_c$. We continue to expand a node (purple), backtrack (blue arrows) to update values of ancestors (blue) whose $LH(s')$ have not been computed. Bottom right: After 10 expansions, we collect 4 complete (green) and 2 partial (blue) LH values, and cannot collect data for leaf nodes (red). 
    }
    \label{fig:overview}
    \vspace{-1em}
\end{figure*}

\section{Data Efficient Local Heuristic}
To train the neural network for LoHA*, a dataset of ground truth local heuristic residuals is collected by running the above-described local search on hundreds of thousands of relevant states. This is extremely inefficient and can become prohibitively slow for some domains that require expensive collision checks or large local regions. Our main observation is that the inherent best-first ordering associated with A* expansions enables a single global A* call to solve multiple local heuristic problems. We specifically design backtracking logic that attempts to gather a data point at every single A* expansion (without the need for running a local A*), enabling us to reuse prior search efforts and collect data at a significantly faster rate.

\subsection{Collecting Data by Looking Back}
\label{sec:looking-back}

Our method is built upon a crucial observation regarding the correlation between the order in which nodes are prioritised/expanded in a global search and a local multi-goal A*. \textit{Upon expanding a node $s$ in a global search, the relative ordering in which the nodes in $LR(s)$ (originating from $s$) are expanded in the global search is identical to the order in which the nodes would be expanded in a local search from $s$.} Here, nodes originating from $s$ refer to nodes that have $s$ as a direct parent or ancestor. If this observation can be proven to be true, then the first node in $LRB(s)$ (and originating from $s$) expanded by the global search will correspond to the best border child of $s$ and can be used to compute the local residual for $s$. For notational convenience, let $b(s, s') = c(s, s') + h_g(s')$.

\begin{theorem}[Global-Local Ordering Consistency]
\label{thm:ordering}
A local A* using priority $b(s,s')$ and a global A* using $b(s_{start},s')$ will sort states originating from $s$ in $LR(s)$ identically.
\end{theorem}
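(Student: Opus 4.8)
The plan is to show that, after $s$ is expanded, both searches order the nodes in question by one common sorting key, so that identical orderings follow immediately. Throughout I assume $h_g$ is consistent (which holds for the metric heuristics used in practice), so that A* expands nodes in non-decreasing order of $f=g+h_g$, never re-expands a node, and has $g$-value equal to the true cost-to-reach at the moment of expansion. Write $g^*(u,v)$ for the optimal cost from $u$ to $v$ and, for the proof, set $b^*(s,n):=g^*(s,n)+h_g(n)$; note that in the paper's notation $b(s_{start},n)=g(n)+h_g(n)$, where $g(\cdot)$ is the running cost-to-reach maintained by the global search. The core claim is: for any node $n$ that originates from $s$ in the global search, $b(s_{start},n)=g(n)+h_g(n)=g(s)+b^*(s,n)$, i.e.\ the global priority of $n$ equals the priority a local A* from $s$ assigns to $n$, shifted by the $n$-independent constant $g(s)$.

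Establishing that identity is the key lemma, and it reduces to proving $g(n)=g(s)+g^*(s,n)$ whenever $n$ originates from $s$. Since $n$ is a descendant of $s$ in the search tree, the tree-path $s_{start}\rightsquigarrow s\rightsquigarrow n$ witnesses $g(n)$; its prefix is the tree-path to $s$, whose cost is $g(s)=g^*(s_{start},s)$ (fixed once $s$ is expanded, by consistency); hence its suffix is an $s$--$n$ path of cost $g(n)-g(s)\ge g^*(s,n)$. Conversely the triangle inequality gives $g(n)=g^*(s_{start},n)\le g^*(s_{start},s)+g^*(s,n)=g(s)+g^*(s,n)$, so equality holds. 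With the identity in hand, restrict attention to the subsequence of global expansions consisting of nodes that lie in $LR(s)$ and originate from $s$: by A*'s non-decreasing-$f$ property this subsequence is emitted in non-decreasing order of $g(s)+b^*(s,\cdot)$, hence of $b^*(s,\cdot)$. Running the same (now trivial) argument on the local A* from $s$ — where every expanded node originates from $s$ and is settled with $g$-value $g^*(s,n)$ — shows it too expands in non-decreasing $b^*(s,\cdot)$ order. Fixing a single deterministic tie-breaking rule shared by both searches then forces the two orderings of these nodes to coincide.

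Two ancillary observations make the statement precise. First, the local multi-goal A* never touches a node outside $LR(s)$: any path leaving $LR(s)$ must cross $LRB(s)$, whose nodes are goals, so the first crossing terminates the search; thus ``states in $LR(s)$'' is exactly the set the local search ever considers, and (since the optimal $s$--$n$ path to any node expanded before termination stays inside) its $g$-values really are $g^*(s,\cdot)$, justifying the previous paragraph. Second, the two searches need not agree on \emph{which} nodes originate from $s$: a node $n\in LR(s)$ whose cheapest path from $s_{start}$ bypasses $s$ is expanded in the global search through a non-$s$ parent and is excluded from the subsequence above. This is harmless — the theorem asserts only that the nodes that \emph{do} originate from $s$ come out in the same relative order, and the argument pins each such node to the position dictated by $b^*(s,\cdot)$ regardless of which other nodes are present or absent.

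I expect the main obstacle to be exactly this last point together with the first: carefully delimiting the in-scope set of nodes and arguing that the possible set-difference between the two searches cannot reorder the common elements, rather than the algebra of the key lemma, which is short. The care is all in the bookkeeping — about descendants in the search tree, about what ``originates from $s$'' means once a node is reachable by several paths, and in flagging the two hypotheses the clean statement relies on: consistency of $h_g$ (for ``no re-expansions'' and ``$g$ optimal at expansion'') and a shared tie-breaking rule (since A* only pins down the sorting key, not the resolution of ties).
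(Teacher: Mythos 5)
Your proof is correct and rests on the same core observation as the paper's: for descendants of $s$, the global priority $b(s_{start},s')=c(s_{start},s)+b(s,s')$ differs from the local priority only by the constant $c(s_{start},s)=g(s)$, so the relative order is preserved. The paper's proof is a two-sentence version of this; your version adds the supporting bookkeeping (the decomposition $g(n)=g(s)+g^*(s,n)$ via consistency of $h_g$, the confinement of the local multi-goal search to $LR(s)$, and the shared tie-breaking rule) that the paper leaves implicit.
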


\begin{proof}
    % Observe how all states $s'$ in the local A* are prioritized by $c(s_{cur},s') + h_g(s')$. Note how o
    Once $s$ is expanded in the global A*, all successor states $s'$ (direct children and onwards) are prioritized by $b(s_{start},s') = c(s_{start},s) + c(s, s') + h_g(s') = c(s_{start},s) + b(s,s')$. Since $c(s_{start},s)$ is a constant, we can remove this term without changing the sorted order to get that the children are sorted by $b(s, s')$. This is identical to the local A* search ordering rooted at $s$.
\end{proof}

% Since the local A* and global A* expand nodes in the same orders, the first expanded successor of $s_{cur}$ in $LRB(s_{cur})$ will similarly be the first for a local A*. Thus $LH(s_{cur})$ can be computed by running a global A* and keeping track of the first successor $s' \in LRB(s_{cur})$ that is expanded.

Thus now, instead of running a local search rooted at different states, we can simply verify if the node $s'$ being expanded by the global search is the first in the $LRB$ for some previously expanded node $s$. We achieve this by simply backtracking from $s'$ to each of its ancestors $s$ and checking if $s' \in LRB(s)$ and if any node in $LRB(s)$ has previously been expanded. If not, we are guaranteed that $s'$ is the best border state for $s$ and $h_k(s)$, the local heuristic residual for $s$ is computed and included in the dataset. This verification process is extremely fast and has negligible overhead as it requires no collision checking or queue operations allowing the global search to operate unhindered. 
This idea is similarly motivated as Hindsight Experience Replay \cite{hindsight2017} but is a more nuanced idea leveraging best-first search's intermediate expansion process.

Figure \ref{fig:overview} shows this process occurring over expansions in a global search. When expanding state $s'$ (purple), we backtrack (blue arrows) ancestor states $s$ and check if $s' \in LRB(s)$. We see this first occurs when $s_4$ satisfies $LRB(s_1)$, denoted by the green arrow. Since $s_1$ has found its best border state, we do not include it in future checks (as seen when expanding $s_5$ we do not backtrack to $s_1$). Repeating this logic, after 10 expansions we collect 4 data points. 

\subsection{Collecting Partial Data}
Although this logic works, the amount of data collected per iteration of global search was lower than expected. This was due to many nodes $s$ in the global search never having an $s' \in LRB(s)$ expanded. Instead, many nodes had made partial progress, i.e. nodes in the local region but not in the border were expanded. In Figure \ref{fig:overview} this corresponds to $s_6$ and $s_7$ which both have partial progress (via $s_{10}$) but don't have a node in the $LRB$ expanded. Instead of completely ignoring these data points, we developed a mechanism to extract useful approximations on $h_k(s)$ and include it in the dataset. 

We know that in A* the priority $(b_{start}, s')$ of expanded states $s'$ increases monotonically until the goal is reached. Hence, at any point during an A* search the priority of an expanded $s'$ is a lower bound on the optimal solution cost. In our case the priority of the state $s' \in LR(s)\backslash LRB(s)$ ends up defining a lower bound on $h_{gk}(s)$. Thus, each time we expand $s'$, we update the $h_{gk}(s) \leftarrow c(s, s') + h_g(s')$ of each ancestor of $s$ whose $LRB(s)$ has not been reach yet. 
Upon the global search terminating, we can use these lower bound values which leads to dramatically more datapoints.  

% Upon the global search terminating, we identify all nodes that have partial progress made and compute an estimate of their heuristic residual using their $h_{gk}$ lower bound ($h_k(s) = h_{gk}(s) - h(s)$). This allows us to collect dramatically more datapoints.  

\begin{figure*}[t!]
    \begin{subfigure}[c]{0.34\textwidth}
        \resizebox{1\textwidth}{!}{
        \centering
        \begin{tabular}{|c|c|c|c|c|c|} 
        \cline{2-6}
        \multicolumn{1}{c|}{} & \multicolumn{5}{c|}{\# Expansions Per Sample} \\ \hline
        $K=$ & 2 & 4 & 8 & 12 & 16 \\ \hline
        Local A* & 11 & 74.4 & 247 & 458 & 616 \\
        Complete & 16.5 & 27.1 & 34.9 & 37.6 & 38.9 \\
        Incomplete & 5.0 & 5.0 & 5.0 & 5.0 & 5.0 \\ \hline
        \end{tabular}
        }
        \caption{Amount of work (\# expansions) required to collect a single data point as $K$ increases. Local A* uses an oracle A* to produce individual data points while Complete and Incomplete gather data across global start-goal problems using our backtracking methodology.}
        \label{table:expansions-per}
    \end{subfigure}
    \hfill
    \begin{subfigure}[c]{0.32\textwidth}
         \centering
         \includegraphics[width=\textwidth]{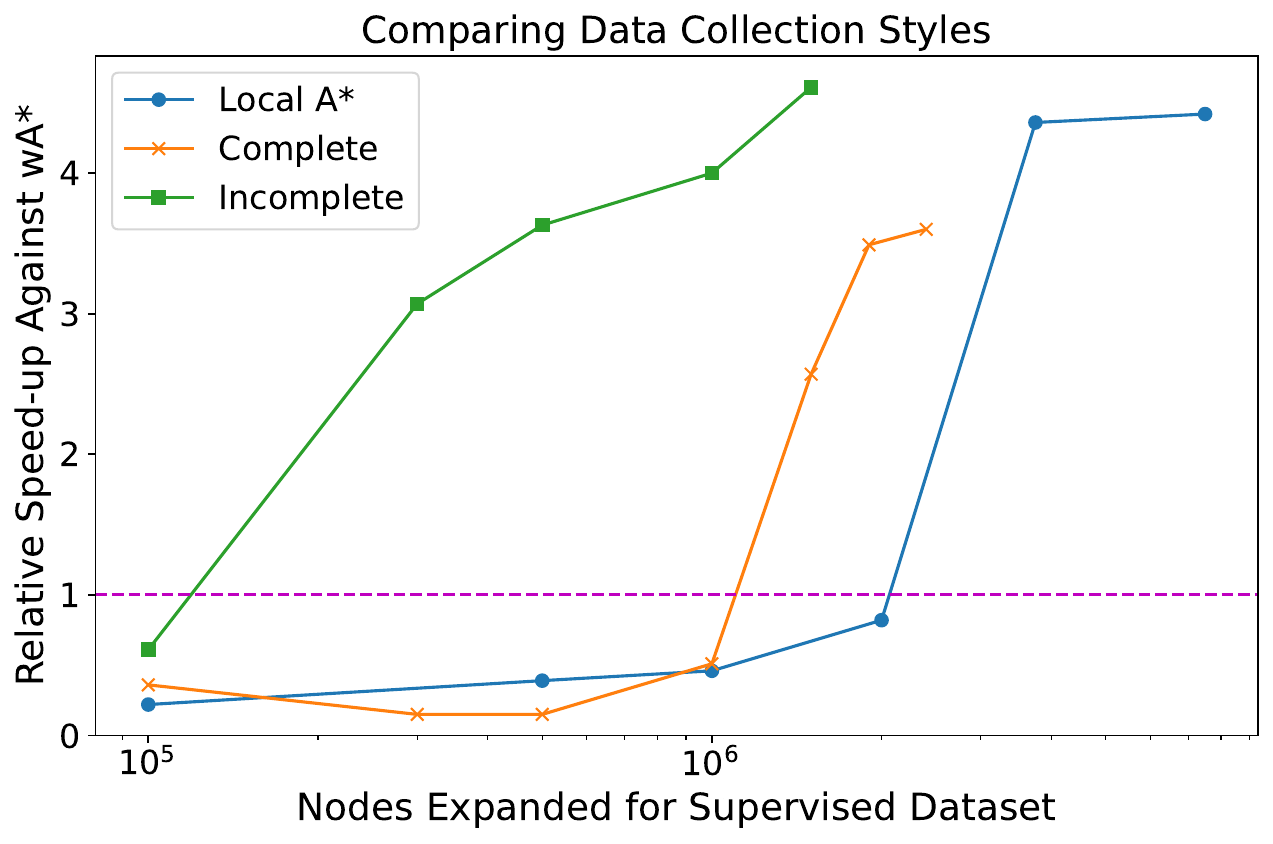}
         \caption{Performance vs Work for Dataset}
         \label{fig:offline_performance}
    \end{subfigure}
    \begin{subfigure}[c]{0.32\textwidth}
         \centering
         \includegraphics[width=\textwidth]{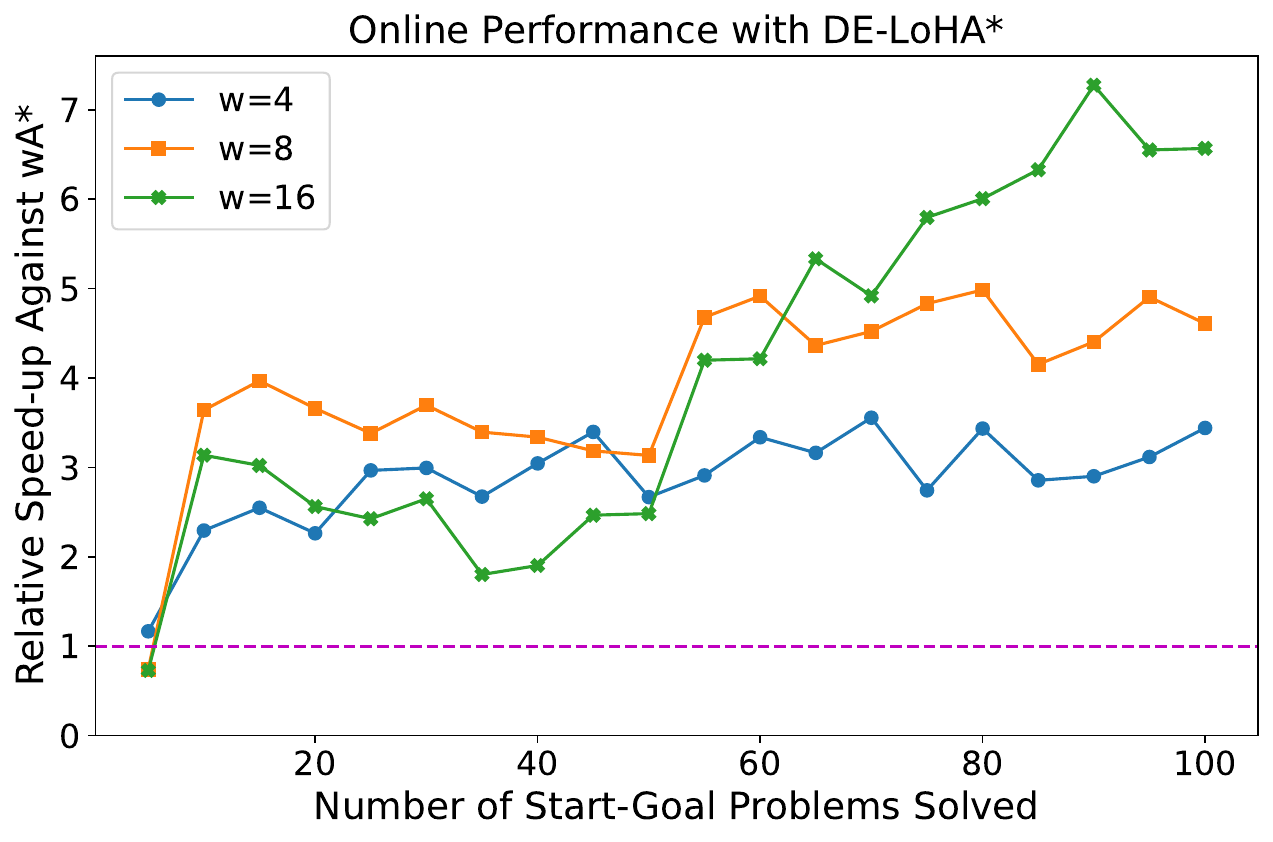}
         \caption{Online performance of DE-LoHA*}
         \label{fig:online_performance}
    \end{subfigure}
    \caption{(a) compares collecting data via a ground truth oracle (Local A*) against our ``Complete" and ``Incomplete" collection methods. 
    % We see that both of our variants collect data much more efficiently than Local A*, especially as $K$ increases. ``Incomplete" gathers data with severe approximations so it is unclear how useful this data is. 
    (b) plots the ``Speed-up" (as measured in nodes expanded) of LoHA* trained on datasets gathered from Local A* calls (true oracle) or our datasets from running a global A* on start-goal problems. 
    % We see that LoHA* still performs well when training on the noisy incomplete dataset, and requires an order of magnitude less work to collect training data. 
    (c) We run DE-LoHA* by solving start-goal problems (and collecting data while doing so), and retraining every 5 problems. We see that DE-LoHA* can improve performance from just solving start-goal problems without needing an external oracle after the initial 5 problems. }
    \label{fig:noise-both}
    \vspace{-1em}
\end{figure*}

% \subsubsection{Scaling}
% However, many nodes had loose lowerbounds due to the search only expanding very few successor nodes in their local region. Computing residuals using these bounds will likely prove ineffective. We handled this issue by linearly scaling the lowerbound values. Concretely, let $s'$ correspond to the last expanded node in the local region of state $s$. $s'$ was used for computing the lower bound on $h_{gk}(s)$. Let $d(s,s') = \max(|s.x-s'.y|, |s.y-s'.y|)$ be the distance from $s'$ to $s$. Since $s' \in LR(s) \backslash LRB(s)$, we have $d(s,s') < K$. We can interpret $\alpha(s,s') = d(s,s')/K < 1$ as the progress towards reaching the border of $s$. We thus approximate the $h_{gk}(s)$ lowerbound with $\bar{h_{gk}(s)} = h_{gk}(s) / \alpha(s,s')$. Intuitively, this assumes that our partial progress linearly scales to reaching the border region $K$ away.

% \subsubsection{Downweighting Samples}
One drawback of including this ``incomplete" data is that our dataset gets skewed as these points are more numerous than complete data. We handled this issue by downweighting their contribution to the loss function. Concretely, let $s'$ correspond to the last expanded node in the local region of state $s$. $s'$ was used for computing the lower bound on $h_{gk}(s)$. Let $d(s,s') = \max(|s.x-s'.y|, |s.y-s'.y|)$ be the distance from $s'$ to $s$. Since $s' \in LR(s) \backslash LRB(s)$, we have $d(s,s') < K$. We can interpret $\alpha(s,s') = d(s,s')/K < 1$ as the progress towards reaching the border of $s$. Thus when training and regressing onto $LH(s)$, we weigh our loss by $\alpha(s,s')$. Intuitively, this downweighs ``incomplete" data points by their progress but does not ignore them entirely.

\textbf{Closed List as an Obstacle}
Astute readers may have noticed a possible issue with Theorem \ref{thm:ordering}; although the states are sorted identically, we ignored the effect of the closed list of previously expanded states in the global A*. These states $\in LR(s)$ would \textit{not} be re-expanded when explored through $s$ in the global search and we can thus view states in the closed list as obstacles. Note this observation has been exploited in previous heuristic search work on grid worlds \cite{closedListObstacle2021}. 
% Since we will use this local heuristic value inside a search algorithm, incorporating the closed list will result in a more accurate local heuristic value during use. 
Interestingly, we found that incorporating the closed list led to no meaningful performance difference so we ignore it.

\section{Experimental Results}
In this section, we provide empirical evidence demonstrating the benefits of using the proposed data collection framework with LoHA* and the performance of De-LoHA* as an online algorithm. The neural network utilized by all variants and baselines are identical to that described in \citet{localHeuristic}. 

% We will refer to the data collection framework that runs a forward local search to collect a data point as \textit{Local A*}. The De-LoHA* variant that only collects complete data (i.e., samples for which a state in its LRB has been expanded) will be referred as \textit{complete}. Finally, the entire De-LoHA* framework that also records partial data  

We evaluate our framework on ten 1024x1024 maps with 30\% random obstacles, minimizing travel time between start-goal pairs. Identical to LoHA*, we model a car with state $(x,y,\theta,v)$. The positions are discretized $x,y$ by 0.5, heading $\theta$ by 30 degrees, and velocity $v \in \{-1,0,1,2,3\}$. The car has unit-cost actions that follow Ackermann constraints, $\Delta v \in \{-1, 0, 1\}$ and $\Delta$ steering angle $\in \{-60, -30, 0, 30, 60\}$. The global heuristic used for this domain is a scaled variant of the Euclidean distance $h_g = L_2(s, s_{goal})/3$ (as the max velocity is 3). Unless specified, the results are reported for a small local region size of $K=4$ as this was found to be effective in LoHA*. 

\textbf{Data Efficiency}
% Our first figure shows how the backtracking logic substantially decreases the amount of nodes expanded to generate one data point.
Table \ref{table:expansions-per} shows how many nodes expanded are required to collect a single training data point as a function of $K$. We see that running local searches scales poorly and requires 100s of expansions for a single data point for $K > 4$. We observe that as $K$ increases, the work for gathering ``Complete" via backtracking increases but then saturates as when $K$ is sufficiently high, the global A* mainly adds data points on states on the solution path and few else (as the search rarely fully explores a local minima not on the path).
% and adds it to the dataset. Instead, most points in the dataset lie on the solution path which all 
On the flip side, incomplete remains constant regardless of $K$ as all states with any successor expanded becomes a data point (as we have partial data about its $LH(s)$ value), including states that are in local minima.

% However, we observe that both the ``Complete" and ``Incomplete" variant of De-LoHA* (the former only records data points for which a state in the $LRB$ has been expanded by the global search).
% Specifically, complete does not increase as K increases because he only complete datapoints collected lie on the solution path; nodes not on that path do not have successors that reach their $LRB(s)$. This is detrimental to learning (next subfigure) as the supervised dataset does not contain many examples of states in local minima which should be avoided. 
% On the flip side, incomplete remains constant regardless of $K$ as all states with any success expanded becomes a datapoint (as we have at minimum partial data about its $LH(s)$ value), including states that are in local minima.

One concern with using incomplete data by backtracking is that it is noisier data compared to the true residuals collected by local A*. 
In order to understand the impact of this quality difference, Fig \ref{fig:offline_performance} plots the performance of LoHA* as a function of the amount of nodes expanded to collect each dataset. The y-axis ``Speed-up" is the multiplicative reduction of nodes expanded to find paths across 100 problems between weighted LoHA* and weighted A* with $w=4$ (e.g. 3 represents LoHA* expanding 1/3 the nodes). 
We see that even though incomplete data (green) contains approximations, we can still effectively use it to learn a heuristic that leads to performance gain in substantially less data. Using a local A* (blue) or only complete data (orange) requires a magnitude more work to get similar performance. We ran an ablation removing downweighing (i.e. not weighting by $\alpha(s,s')$) and found that this reduced performance from 3.9x to 2.4x. 
% Additionally, we see that only using the complete data (green) does not produce much benefit. 
Note that although LoHA* expands fewer nodes, the model inference introduces a large overhead which results in it taking roughly 2.3 seconds per problem compared to the baseline which takes 0.1 seconds. 
% Decreasing the overhead or applying LoHA* to more complex problems (where the relative NN inference overhead is less) is important to tackle in future work.

% As mentioned earlier, complete does not collect enough data in local minima and therefore has low performance.
% Our second figure tests how 

\textbf{Online Performance}
We evaluate how using our data-efficient framework DE-LoHA* enables learning local heuristics online. 
% Our goal is to demonstrate how we can rapidly learn a local heuristic \textit{by only solving global start-goal problems}.
We initially start with a small dataset collected by solving 5 start-goal planning problems using global A* and the backtracking logic. This small dataset is used to train an initial local heuristic model. The learnt model is then used with DE-LoHA* to solve another 5 start-goal problems. The data accumulated while solving these problems is accumulated and then used to retrain and improve the model further (the model is retrained after solving every 5 problems). 
We evaluate online DE-LoHA* run with different $w$ with the corresponding weighted A* baseline. We plot the performance on 50 problems (plotting on just the 5 problems encountered is too noisy). Within 20 encountered start-goal problems (3 iterations of retraining DE-LoHA*), we start to get a non-trivial performance increase. As we continue solving problems and gathering data, improvement increases ($w=16$) or saturates ($w=4,8$). This highlights how DE-LoHA* can perform well from just solving start-goal problems.

\section{Conclusion}
We have demonstrated how we can collect data more efficiently by reasoning about intermediate steps of the ground truth oracle when applied to LoHA*. Additionally, we have shown how this can enable online data collection and performance improvement, all while solely starting start-goal tasks without any explicit data collection phase. 

\textbf{Extensions}
% \noindent \textbf{Extensions} | 
Our data efficient technique can be applied with no or small modifications to other learning methods which require an oracle search method. When learning a cost-to-go heuristic $h(s,s')$, existing work just uses states on the solution path with the oracle's optimal $c(s,s_{goal})$ \cite{learningGlobalHF2011,learningHeuristicA2020}. We observe how any state $s_i$ and an ancestor $s_i'$ in the oracle's search tree results in a valid optimal $c(s_i',s_i)$ data point that can be used as well. We can thus backtrack from $s_i$ to ancestors to collect data. Similarly for learning an expansion delay \cite{learnExpansionDelay2021}, the delays between states $s_i', s_i$ can be used. We hope future work builds on our data efficient framework to decrease computational burden and enable online learning. 

\section*{Acknowledgements} 
R.V. would like to thank G.K. for his motivation and advice. This material is partially supported by NSF Grant IIS-2328671.

% \begin{table}
%     \centering
%     \begin{tabular}{|c|c|c|c|c|c|}
%     & \multicolumn{5}{c}{\# Expansions Per Data Point } \\
%     Window Size & 2 & 4 & 8 & 12 & 16 \\
%     Local A* & 11 & 74.4 & 247 & 458 & 616 \\
%     Complete & 16.5 & 27.1 & 34.9 & 37.6 & 38.9 \\
%     Incomplete & 5 & 5 & 5 & 5 & 5
%     \end{tabular}
%     \caption{Tmp}
%     \label{table:tmp}
% \end{table}

\bibliography{aaai24}

\clearpage

\appendix
\setcounter{figure}{0}
\renewcommand{\thefigure}{A\arabic{figure}}
\setcounter{table}{0}
\renewcommand{\thetable}{A\arabic{table}}

\section{Algorithm}

One of the main benefits of our efficient data collection technique is that it requires minimum modification to existing search algorithms. Algorithm \ref{alg:high-level} depicts the psuedocode for an A* search with necessary components in blue. Since states in a search algorithm already keep track of their parent (Line \ref{line:parent}), and as following backpointers is extremely fast/negligible overhead, our backtracking occurs for ``free".

We additionally highlight how this backtracking logic is applicable to any best-first search algorithm. Therefore, we can use this logic with LoHA* that employs Focal Search \cite{focalSearch1982} instead of A*. When used by non-optimal best-first search algorithms, the cost $c(s,s')$ (Line \ref{line:cost}) is no longer optimal. Instead, it becomes the cost associated with the search algorithm, e.g. with weighted A* with $w=10$ it can return a higher $c(s_1, s_2)$ compared to when run with $w=2$. This data can still be very useful as approximate or bounded-suboptimal values.

\begin{algorithm}[!t]
    \caption{A* with Backtracking Data Collection}
    \label{alg:high-level}
    \footnotesize
    \SetKwInOut{Input}{Input}
    \SetKwInOut{Output}{Output}
    \SetKwFunction{dist}{\scriptsize dist}
    
    \Input{States $s$, Edges, Start $s_{start}$, Goal $s_{goal}$, Local Window Size $K$}
    \Output{Path $[s_{start}, s_i, ... s_{Goal}]$, Local Heuristic Data Points $D_i$}
    \SetAlgoLined\DontPrintSemicolon

    %%%%%%%%%%%%%%%%%%%%%%%%%%%%%%%%%%%%%
    % Plan.
    %%%%%%%%%%%%%%%%%%%%%%%%%%%%%%%%%%%%%    
    \SetKwFunction{proc}{Plan}
    \SetKwProg{myproc}{Procedure}{}{}

    \SetKwFunction{BacktrackDataCollection}{BacktrackDataCollection}
    \SetKwProg{myprocdata}{Procedure}{}{}

    \SetKwFunction{b}{b}
    \SetKwProg{proclh}{Procedure}{}{}

    \vspace{5pt}
    \myproc{\proc{$S,E,s_{start},s_{goal},K$}}{CompletedData, IncompleteData = $\{\}, \{\}$ \;
    OPEN = $\{s_{start}\}$ \; \label{line:a*-start}
    \While{OPEN not empty}{
        $s_{min}$ = OPEN.min() \;
        \If{$s_{min} == s_{goal}$}{
            \KwRet Reverse $[s_{min}, s_{min}$.parent ... $s_{start}]$, IncompleteData, CompleteData \;
        } % End if goal
        Successors = Expand($s_{min}$) \;
        $\forall s' \in$ Successors, $s'$.parent = $s_{min}$ 
        {\color{blue} $\triangleright$ Required anyway to backtrack solution path} \; \label{line:parent}

        % {\color{blue}$\triangleright$ Backtracking logic below} \;
        % $N_p$ = $N_{min}$.parent \;
        % \vspace{3pt}
        {\color{blue} \BacktrackDataCollection{$s_{min}$}} \;
        $\forall s' \in$ Successors, OPEN.insert($s'$) \; \label{line:a*-end}
        
    } %End OPEN while
    \KwRet Failure \;
    }

    \vspace{5pt}
    \myprocdata{\BacktrackDataCollection{$s$}}{ \label{line:backtrackfunction}
    $s_{cur}$ = $s$.parent \;
    \While{$s_{cur}$ != null and $s_{cur} \notin $ CompletedData}{
        \If{\dist($s_{cur}, s$) $>$ $K$}{
            CompletedData[$s_{cur}$] = \b($s_{cur}, s$)\;
            Remove $s_{cur}$ from IncompleteData\;
        }
        \Else{
            IncompleteData[$s_{cur}$] = \b($s_{cur}, s$) \;
        }
        $s_{cur} = s_{cur}$.parent \;
    } % End loop for backtracking logic
    } % End backtrackdata function
    
    \vspace{5pt}
    
    \proclh{\b{$s$, $s'$}}{
        $c(s, s') = s'.cost - s.cost$ \; \label{line:cost}
        $h_k = c(s, s') + h_g(s') - h_g(s)$ \;
        \KwRet $h_k$ \;
    } % end lhk proc
\end{algorithm}

\end{document}